\documentclass{article} 
\usepackage{iclr2027_conference,times}

\usepackage{amsmath,amsfonts,bm}

\def\eqref#1{equation~\ref{#1}}

\def\1{\bm{1}}

\DeclareMathAlphabet{\mathsfit}{\encodingdefault}{\sfdefault}{m}{sl}
\SetMathAlphabet{\mathsfit}{bold}{\encodingdefault}{\sfdefault}{bx}{n}

\usepackage[hidelinks]{hyperref}
\usepackage{url}

\usepackage{amsmath,amsthm}
\usepackage{amssymb}
\usepackage{booktabs}
\usepackage{multirow}
\usepackage{tabularx}
\usepackage{array}
\usepackage[table]{xcolor}
\usepackage{graphicx}
\usepackage{algorithm}
\usepackage{algpseudocode}

\usepackage[most]{tcolorbox}
\usepackage{listings}
\usepackage{xcolor}
\algrenewcommand\algorithmiccomment[1]{
    \hfill$\triangleright$~\textit{#1}
}

\definecolor{promptgreen}{HTML}{238B2A}
\definecolor{promptbackground}{HTML}{F2FAF2}
\definecolor{promptorange}{HTML}{C65D12}
\definecolor{promptbackground2}{HTML}{FFF7ED}
\lstdefinestyle{promptstyle}{
    basicstyle=\ttfamily\footnotesize,
    columns=fullflexible,
    keepspaces=true,
    breaklines=true,
    breakatwhitespace=true,
    breakautoindent=false,
    breakindent=0pt,
    prebreak={},
    postbreak={},
    showstringspaces=false,
    showtabs=false,
    showspaces=false,
    tabsize=2,
    aboveskip=0pt,
    belowskip=0pt,
    xleftmargin=0pt,
    xrightmargin=0pt
}

\newtcblisting{promptbox}[2][]{
    enhanced,
    breakable,
    listing only,
    listing engine=listings,
    listing options={style=promptstyle},
    title={#2},
    colback=promptbackground,
    colframe=promptgreen,
    colbacktitle=promptgreen,
    coltitle=white,
    fonttitle=\bfseries\normalsize,
    boxrule=0.7pt,
    arc=1.5mm,
    outer arc=1.5mm,
    boxsep=0pt,
    left=5mm,
    right=5mm,
    top=4mm,
    bottom=4mm,
    lefttitle=4mm,
    righttitle=4mm,
    toptitle=1.5mm,
    bottomtitle=1.5mm,
    before skip=6pt,
    after skip=10pt,
    #1
}
\newtcblisting{agentpromptbox}[2][]{
    enhanced,
    breakable,
    listing only,
    listing engine=listings,
    listing options={style=promptstyle},
    title={#2},
    colback=promptbackground2,
    colframe=promptorange,
    colbacktitle=promptorange,
    coltitle=white,
    fonttitle=\bfseries\normalsize,
    boxrule=0.7pt,
    arc=1.5mm,
    outer arc=1.5mm,
    boxsep=0pt,
    left=5mm,
    right=5mm,
    top=4mm,
    bottom=4mm,
    lefttitle=4mm,
    righttitle=4mm,
    toptitle=1.5mm,
    bottomtitle=1.5mm,
    before skip=6pt,
    after skip=10pt,
    #1
}

\newtheorem{proposition}{Proposition}
\title{When All Rollouts Fail: Policy-Compatible Joint Decoding for Hard-Problem Reasoning}

\newif\ifpublicversion
\publicversiontrue
\ifpublicversion
\iclrfinalcopy
\author{
\textbf{Zehao Wang}\textsuperscript{1,2}
\hspace{0.55em}
\textbf{Hanzhu Chen}\textsuperscript{2}
\hspace{0.55em}
\textbf{Runquan Gui}\textsuperscript{1,2}
\hspace{0.55em}
\textbf{Zhe Wang}\textsuperscript{1}
\hspace{0.55em}
\textbf{Lin Yang}\textsuperscript{3}\thanks{Corresponding author:
\texttt{lin.yang.jason@gmail.com}. Work done during Zehao Wang's internship at
Tencent. Contact: \texttt{zh-wang@mail.ustc.edu.cn}.}
\\[0.4ex]
{\normalfont\textsuperscript{1} University of Science and Technology of China}
\hspace{0.55em}
{\normalfont\textsuperscript{2} Tencent}
\\
{\normalfont\textsuperscript{3} Shanghai Artificial Intelligence Laboratory }
}
\fi
\newcommand{\Mymethod}{\textsc{PICO}}
\begin{document}

\maketitle
\ifpublicversion\lhead{Preprint. Under review.}\fi

\begin{abstract}
Reinforcement Learning with Verifiable Rewards (RLVR) derives its learning signal from binary rewards via on-policy exploration and becomes a powerful paradigm for improving the reasoning of large language models. Yet it faces a learning cliff on problems beyond model's current ability: when every rollout fails, the samples reveal nothing about a correct solution, and group-relative estimators (e.g., GRPO, RLOO) collapse the update to zero advantage. Expert trajectories offer a remedy, but they are incompatible with the current policy and often lie in the student's low-probability regions, so the surviving updates demand large parameter shifts and decay into surface-level imitation rather than transferable reasoning. We argue that a useful learning signal on hard problems must be both verifiably correct and sufficiently compatible with the current policy.
We propose Policy-Compatible Reinforcement Learning (\textbf{\Mymethod{}}), which performs distribution-constrained joint decoding: expert data guides decoding at the token level while preserving the student's relative token probabilities, and a dynamic top-k mechanism confines the sampling distribution under a finite divergence budget, yielding trajectories that are verified and policy-compatible by construction. These trajectories are optimized jointly with on-policy rollouts under a mixed-behavior objective that mitigates their distribution mismatch via importance sampling. Experiments across reasoning diverse datasets and agentic benchmarks demonstrate that \Mymethod{} consistently outperforms standard RLVR and prior off-policy baselines.
\end{abstract}

\section{Introduction}
\label{introduction}

Reinforcement learning with verifiable rewards (RLVR) has emerged as a prominent paradigm for post-training large language models (LLMs)\citep{guo2025deepseek,jaech2024openai,team2025kimi}, demonstrating strong effectiveness in enhancing reasoning capabilities across domains such as mathematics, coding, and agentic tasks\citep{dong2026agentic,zhang2025agentlandscape}. RLVR enables LLMs to learn directly from verifiable reward feedback on trajectories generated through on-policy exploration, yielding substantial and generalizable performance gains\citep{zhu2025path,shenfeld2026rlrazor}. However, this reliance on successful exploration creates a learning cliff on challenging problems beyond the model’s current capabilities: when all sampled rollouts fail, the policy receives no positive reward signal\citep{agrawal2026offgrpo,zhang2025critiquegrpo,yue2025does}. This issue is particularly acute for methods based on group-relative advantage estimation (e.g., GRPO)\citep{yu2026dapo,zhang2026scaf}, as all-zero-reward groups cause the estimated advantages to collapse to zero, leaving no informative learning signal to drive futher improvement on these problems.

A natural remedy is to leverage expert data, which offer a promising way to overcome this exploration bottleneck and are increasingly used to supplement online RL training. Typically generated by human experts or stronger models, expert trajectories expose the student to high-quality reasoning paths and provide informative learning signals. Yet these trajectories are not necessarily compatible with the current policy and often fall within low-probability regions under its distribution.
Existing methods utilize such trajectories in three main ways. Some use them as supervised fine-tuning (SFT) targets alongside RL training\citep{ma2026relift,fu2026srft,zhang2026chord}. When these trajectories lie in low-probability regions under the current policy, directly fitting them can induce excessive policy drift and result in surface-level imitation rather than transferable reasoning\citep{yan2025luffy}. Other approaches incorporate expert trajectories into rollout groups as off-policy samples to recover informative learning signals\citep{yan2025luffy,dong2026rlplus}. The distributional mismatch between these off-policy trajectories and on-policy rollouts can distort the resulting learning signals and destabilize optimization\citep{agrawal2026offgrpo,setlur2026reuse,shenfeld2026rlrazor}. A further line of work\citep{huang2026prefixrft,qu2026pope,zhang2026stephint,zhang2026bread} leverages expert trajectories indirectly, using them to construct prefixes or hints that guide subsequent exploration and generation. Although the resulting continuations are generated by the current policy, they are conditioned on intermediate states that the policy may not be able to reach on its own from the original prompt\citep{ross2011dagger,qu2026pope,agrawal2026offgrpo}. This creates privileged-context dependence: once the guidance is removed, the model may struggle to reproduce the full reasoning process. 
Collectively, these limitations reveal a fundamental trade-off: \textit{an effective learning signal on challenging problems must come from trajectories that are both verifiably correct and sufficiently compatible with the current student policy.} Although expert data can provide verifiably correct trajectories, naively incorporating them may sacrifice policy compatibility by pulling the policy toward behaviors that lie far outside its current distribution, ultimately undermining stable and effective policy improvement.
To address this challenge, we propose Policy-Compatible Reinforcement Learning (\Mymethod{}), which uses distribution-constrained joint decoding to recover positive learning signals from off-policy trajectories while explicitly controlling the induced distribution shift.
Specifically, joint decoding constructs the next-token candidate set by intersecting high-probability tokens predicted by a teacher conditioned on expert data with those predicted by the student, thereby restricting sampling to candidates favored by both models. In this way, the teacher provides token-level guidance to improve rollout accuracy, while sampling according to the student’s relative probabilities preserves its own preferences. To further constrain the joint sampling distribution within a finite divergence budget, we introduce a dynamic top-$k$ mechanism that adaptively adjusts the teacher’s candidate set.
Finally, the resulting trajectories verified for correctness are optimized alongside on-policy rollouts through a mixed-behavior objective.

As illustrated in Figure~\ref{fig:picorl-overview}, \Mymethod{} activates joint decoding only for all-failure groups, preserving standard on-policy optimization whenever student exploration already yields positive rewards. Figure~\ref{fig:rescue-dynamics} further shows that joint decoding recovers verified, positive-reward trajectories in up to 80\% of all-failure groups, thereby restoring informative learning signals where standard on-policy sampling yields none. Extensive experiments across mathematics and general reasoning benchmarks demonstrate that \Mymethod{} consistently outperforms standard RLVR and existing expert-guided baselines. Further evaluations on agentic tasks establish its generalizability beyond conventional reasoning settings. Comprehensive ablation studies validate the effectiveness of its key components and confirm the importance of explicitly controlling the trade-off between rollout quality and student-policy proximity.
In summary, our main contributions are threefold:
\begin{itemize}
\item We identify a fundamental trade-off between rollout quality and student-policy proximity when incorporating offline expert information into RLVR.

\item We introduce \Mymethod{}, which combines distribution-constrained joint decoding with failure-triggered mixed-behavior optimization to generate and learn from verified, policy-compatible trajectories.

\item Extensive experiments on mathematics, general reasoning, and agentic tasks demonstrate the effectiveness and generalizability of \Mymethod{}.

\end{itemize}

\section{Preliminaries}
\label{sec:preliminaries}

\textbf{Reinforcement Learning with Verifiable Rewards.}
Let $\mathcal{D}$ denote a training distribution over queries
$x\in\mathcal{X}$. Given a query $x$, the policy $\pi_\theta$ generates a
response $y=(y_1,\ldots,y_T)$ autoregressively, where $T$ is the response
length. Let $a(y)$ denote the answer extracted
from $y$ and $a^\star(x)$ the ground-truth answer to $x$. The verifier assigns
the binary reward
$
R(x,y)=\mathbf{1}\!\left\{a(y)=a^\star(x)\right\}.
$
The RLVR objective maximizes the expected verifier reward\citep{guo2025deepseek}:
\begin{equation}
J(\theta)
=
\mathbb{E}_{x\sim\mathcal{D}}
\mathbb{E}_{y\sim\pi_\theta(\cdot\mid x)}
\left[R(x,y)\right].
\label{eq:rl-objective}
\end{equation}

\textbf{Group Relative Policy Optimization.}
To optimize this objective, policy-gradient methods are commonly employed\citep{schulman2017ppo}. Among them, Group Relative Policy Optimization (GRPO)~\citep{shao2024deepseekmath} is a representative approach that converts binary rewards into relative advantages within each rollout group without requiring a separately trained value model. For each query $x$, the behavior policy $\pi_{\mathrm{old}}$
generates a group of trajectories $\{y^{(i)}\}_{i=1}^{n}$, which are scored
to obtain rewards $\{R^{(i)}\}_{i=1}^{n}$. The advantage of each trajectory
is then computed by standardizing its reward within the group:
\begin{equation}
A^{(i)}
=
\frac{
R^{(i)}
-
\operatorname{mean}\!\left(\{R^{(j)}\}_{j=1}^{n}\right)
}{
\operatorname{std}\!\left(\{R^{(j)}\}_{j=1}^{n}\right)
}.
\label{eq:grpo-advantage}
\end{equation}
Recent work\citep{liu2025drgrpo} suggests that the standard deviation term in the advantage
calculation may introduce bias into policy optimization. 
The policy is updated by optimizing the clipped, KL-regularized surrogate averaged
over training problems:
\begin{equation}
\mathcal{J}_{\mathrm{GRPO}}(\theta)
=
\mathbb{E}_{x\sim\mathcal{D}}
\left[
\frac{1}{\mathcal{Z}}
\sum_{i=1}^{n}
\sum_{t=1}^{|y^{(i)}|}
\min\!\left(
r_{i,t}(\theta)A^{(i)},
\operatorname{clip}\!\left(
r_{i,t}(\theta),1-\epsilon,1+\epsilon
\right)A^{(i)}
\right)
-
\beta\mathcal{L}_{\mathrm{KL}}(\theta)
\right].
\label{eq:grpo}
\end{equation}
Here, $\mathcal{Z}=\sum_{i=1}^{n}|y^{(i)}|$ is the total number of response
tokens in the rollout group, and
$r_{i,t}(\theta)=
\pi_\theta(y_t^{(i)}\mid x,y_{<t}^{(i)})/
\pi_{\mathrm{old}}(y_t^{(i)}\mid x,y_{<t}^{(i)})$
is the importance-sampling ratio.  The clipping operation limits excessively
large policy updates, while the KL term softly constrains $\pi_\theta$ to
remain close to the reference policy $\pi_{\mathrm{ref}}$. In practice, this
KL penalty can be downweighted or omitted when larger policy shifts are
desired\citep{yan2025luffy,zhang2025critiquegrpo}.

\begin{figure*}[t]
    \centering
    \includegraphics[
        width=0.98\textwidth
    ]{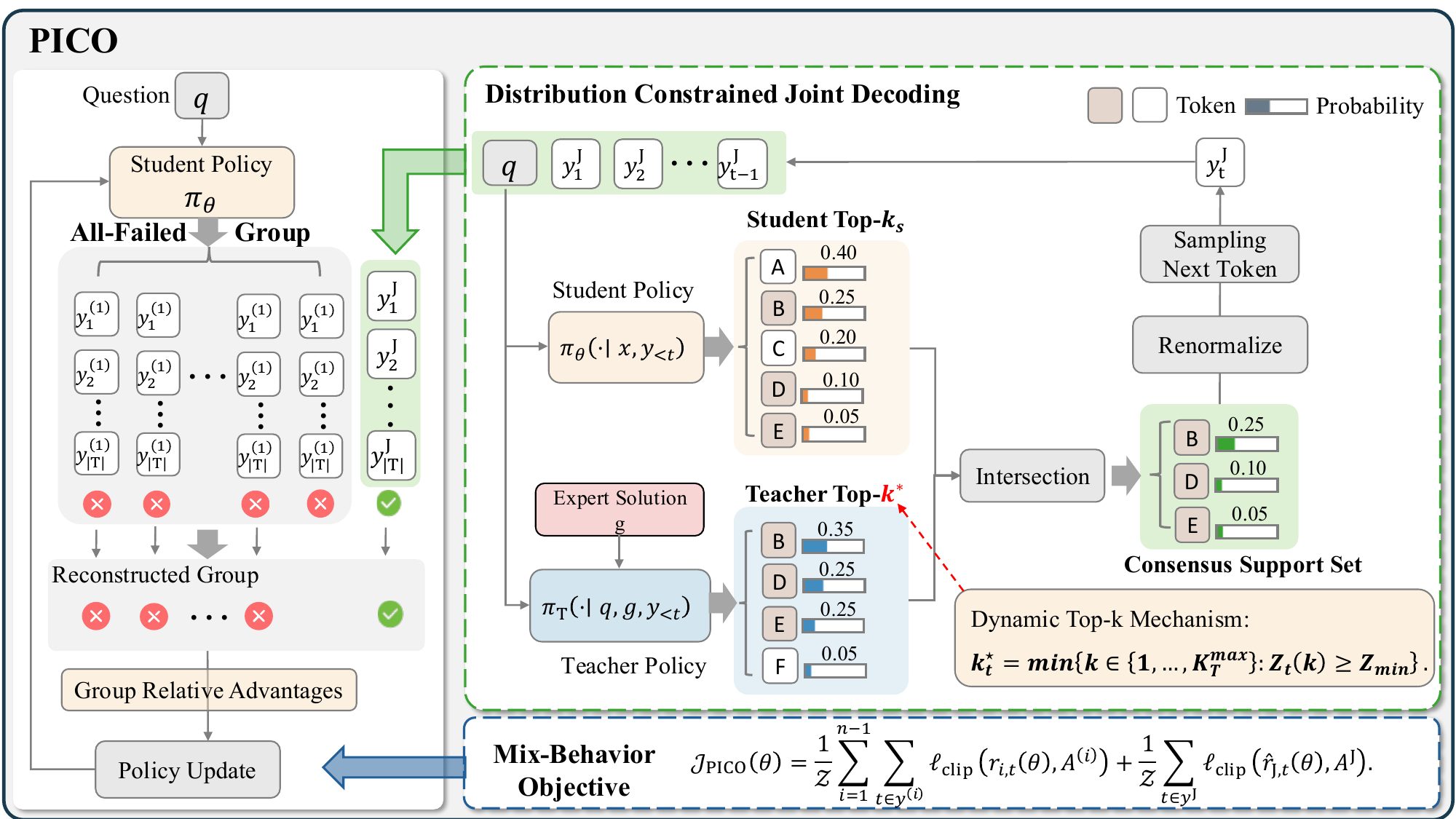}
    \caption{
    Overview of \textbf{PICO}. Given an all-incorrect rollout group sampled from the student policy, PICO generates a verified joint trajectory to replace one failed rollout for mixed-behavior policy optimization. The joint trajectory is generated token by token through distribution-constrained joint decoding, which intersects the teacher and student top-$k$ candidate sets. 
    }
    \label{fig:picorl-overview}
\end{figure*}

\section{Method}
\label{method}
In this section, we detail the design and implementation of \Mymethod. As illustrated in Figure \ref{fig:picorl-overview} and Algorithm \ref{alg:picorl}, the framework consists of two components: distribution-constrained joint decoding and mixed-behavior  policy optimization. First, distribution-constrained joint decoding combines the token-level preferences of the teacher and student to generate rollouts, with a dynamic top-$k$ mechanism adaptively balancing teacher guidance and distribution shift. Second, mixed-behavior policy optimization learns from the resulting trajectories using importance-sampling (IS) ratios to account for the mismatch between the joint-decoding policy and the student policy.

\subsection{Distribution-Constrained Joint Decoding}
\label{subsec:joint-decoding}

To improve the reliability of sampled trajectories, we employ a
teacher--student joint-decoding strategy that restricts token selection to
candidates jointly favored by both models. We first construct a consensus
support set and then optimize a behavior policy over this set, thereby balancing rollout quality against proximity to the student policy.

\textbf{Consensus Support Set.}
Given a query $x$, privileged expert guidance $g$ available only to the
teacher, and a shared generated prefix $y_{<t}$, the next-token distributions
of the student and the teacher models are defined as
\begin{equation}
p_t(v)
=
\pi_{\mathrm{old}}(v\mid x,y_{<t}),
\qquad
q_t(v)
=
\pi_{\mathrm{T}}(v\mid x,g,y_{<t}),
\label{eq:student-teacher-distributions}
\end{equation}
where $v$ denotes a token in the shared vocabulary. Both models condition on
the same generated prefix, while only the teacher has access to $g$. Thus,
the privileged guidance is neither exposed to the student nor appended to
its rollout context; it influences generation only through the teacher's
token-level predictions.

At each decoding step, we construct the consensus support set by intersecting
the top-$k$ sets of the two models:
\begin{equation}
F_t
=
\operatorname{TopK}(q_t,k_{\mathrm T})
\cap
\operatorname{TopK}(p_t,k_{\mathrm S}),
\label{eq:joint-candidate-set}
\end{equation}
where $k_{\mathrm T}$ and $k_{\mathrm S}$ denote the teacher and student
cutoffs, respectively. By retaining only tokens
supported by both models, $F_t$ filters out predictions that are implausible
under either distribution. The teacher contributes privileged information
through candidate selection, while the student continues to determine which
of the retained tokens are compatible with its current policy.

\textbf{Joint-Decoding Policy Optimization.}
Based on a nonempty candidate set $F_t$, the remaining challenge is to assign
sampling probabilities within this set. The resulting behavior policy must
balance two competing objectives. On the one hand, it should assign greater
probability to tokens favored by the teacher so that privileged guidance can
improve rollout quality. On the other hand, it should remain close to the
rollout-time student distribution, thereby keeping the joint trajectory compatible to student optimization. 

We quantify these objectives using the expected teacher confidence
$\mathbb{E}_{v\sim\mu}[q_t(v)]$ and the reverse KL divergence
$D_{\mathrm{KL}}(\mu\Vert p_t)$, respectively. This yields the following
joint-decoding objective:
\begin{equation}
\mu_t^\lambda
=
\arg\min_{\mu\in\Delta(F_t)}
\left[
D_{\mathrm{KL}}(\mu\Vert p_t)
-
\lambda\,\mathbb{E}_{v\sim\mu}\!\left[q_t(v)\right]
\right],
\label{eq:joint-policy-optimization}
\end{equation}
where $\Delta(F_t)$ is the set of probability distributions supported on
$F_t$, and $\lambda\geq0$ controls the trade-off between teacher guidance and
proximity to the student policy. A smaller $\lambda$ better preserves the
student's relative token preferences within $F_t$, whereas a larger
$\lambda$ gives greater influence to the teacher. 
We describe how to obtain this target behavior policy
in Section~\ref{subsec:dynamic-topk}.
\subsection{Dynamic Top-$k$ Joint Decoding}
\label{subsec:dynamic-topk}

\textbf{Closed-Form Behavior Policy.}
For a nonempty support set $F_t$, the optimization problem in
Eq.~\eqref{eq:joint-policy-optimization} admits the following closed-form
solution:
\begin{equation}
\mu_t^\lambda(v)
=
\frac{
p_t(v)\exp\!\left(\lambda q_t(v)\right)
\mathbf{1}\{v\in F_t\}
}{
C_t(\lambda)
},
\qquad
C_t(\lambda)
=
\sum_{u\in F_t}
p_t(u)\exp\!\left(\lambda q_t(u)\right),
\label{eq:soft-joint-policy}
\end{equation}
where $C_t(\lambda)$ is the normalizing constant. The solution incorporates
teacher guidance through two complementary mechanisms: $F_t$ imposes a hard
support constraint by retaining only tokens selected by both models, while
$\exp(\lambda q_t(v))$ softly reweights the student probabilities within
this support. The distribution shifts induced by support restriction and soft reweighting
can be separated exactly.

\begin{proposition}[KL divergence decomposition]
\label{prop:kl-decomposition}
Let
$
Z_t=\sum_{v\in F_t}p_t(v)
\label{eq:retained-mass}
$
denote the student probability mass retained by $F_t$, and let $\mu_t^0$
denote the policy in Eq.~\eqref{eq:soft-joint-policy} at $\lambda=0$. Then
\begin{equation}
D_{\mathrm{KL}}(\mu_t^\lambda\Vert p_t)
=
-\log Z_t
+
D_{\mathrm{KL}}(\mu_t^\lambda\Vert\mu_t^0)
\geq
-\log Z_t.
\label{eq:kl-decomposition}
\end{equation}
The lower bound is attained at $\lambda=0$, for which
$\mu_t^\lambda=\mu_t^0$.
\end{proposition}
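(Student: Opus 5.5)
The plan is a direct computation using the closed form in Eq.~\eqref{eq:soft-joint-policy}. First I will identify $\mu_t^0$ explicitly: at $\lambda=0$ we have $C_t(0)=\sum_{u\in F_t}p_t(u)=Z_t$, so $\mu_t^0(v)=p_t(v)\mathbf{1}\{v\in F_t\}/Z_t$. This is the student distribution renormalized on the consensus set. Since $F_t$ is nonempty and $p_t$ is a softmax output, hence strictly positive, $Z_t>0$ and $\mu_t^0$ is well defined.

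Next I will rewrite the log-ratio. For any $\mu$ supported on $F_t$ (in particular $\mu=\mu_t^\lambda$, whose support is exactly $F_t$), and for every $v\in F_t$,
\begin{equation*}
\log\frac{\mu(v)}{p_t(v)}=\log\frac{\mu(v)}{\mu_t^0(v)}+\log\frac{\mu_t^0(v)}{p_t(v)}=\log\frac{\mu(v)}{\mu_t^0(v)}-\log Z_t .
\end{equation*}
Taking the expectation under $\mu$ over $F_t$ gives $D_{\mathrm{KL}}(\mu\Vert p_t)=-\log Z_t+D_{\mathrm{KL}}(\mu\Vert\mu_t^0)$. Tokens outside $F_t$ contribute nothing to either divergence because $\mu$ vanishes there, under the convention $0\log 0=0$. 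Applying this with $\mu=\mu_t^\lambda$ yields the identity in Eq.~\eqref{eq:kl-decomposition}.

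The inequality then follows from Gibbs' inequality, $D_{\mathrm{KL}}(\mu_t^\lambda\Vert\mu_t^0)\ge 0$. Equality holds if and only if $\mu_t^\lambda=\mu_t^0$, which is the case at $\lambda=0$, so the bound is attained there. I can also note that for $\lambda>0$ equality holds exactly when $q_t$ is constant on $F_t$: in that case $\exp(\lambda q_t(v))$ is a common factor and cancels in the normalization.

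I do not expect a genuine obstacle. The only points needing care are the support and absolute-continuity bookkeeping, namely that $\mu_t^\lambda\ll\mu_t^0\ll p_t$ (both are supported on $F_t$ and $p_t>0$ there), and the observation that $-\log Z_t$ is the same for every $\mu$ supported on $F_t$. The second point is what makes the decomposition separate the support-restriction cost from the reweighting cost exactly.
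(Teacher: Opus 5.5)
Your proposal is correct and follows essentially the same route as the paper: it also rewrites $\log(\mu(v)/p_t(v))$ as $\log\bigl(\mu(v)/(p_t(v)/Z_t)\bigr)-\log Z_t$ for an arbitrary $\mu$ supported on $F_t$, substitutes $\mu=\mu_t^\lambda$, and uses nonnegativity of the KL divergence for the bound, with equality at $\lambda=0$. Your remark that for $\lambda>0$ equality holds exactly when $q_t$ is constant on $F_t$ matches the paper's accompanying observation.
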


The first term, $-\log Z_t$, measures the shift caused by restricting the
sampling support to $F_t$, whereas the second term measures the additional
shift introduced by soft teacher reweighting. Therefore, for a fixed support
set $F_t$, the distribution shift is minimized at $\lambda=0$. Empirically,
we find that the support constraint alone achieves sufficient rollout
accuracy, making additional teacher-based reweighting unnecessary. We
therefore set $\lambda=0$ and obtain our final hard joint-decoding policy:
\begin{equation}
\mu_t^{\mathrm H}(v)
=
\mu_t^0(v)
=
\frac{
p_t(v)\mathbf{1}\{v\in F_t\}
}{
Z_t
}.
\label{eq:hard-joint-policy}
\end{equation}
This policy preserves the student's relative token probabilities within
$F_t$, while the teacher continues to guide generation through support
selection. The derivation of
Eq.~\eqref{eq:soft-joint-policy} and the proof of
Proposition~\ref{prop:kl-decomposition} are provided in Appendix\ref{theoretical-proofs}.

\textbf{Dynamic Top-$k$ Selection.}
Under hard joint decoding, the per-step divergence is determined exactly by
the retained student probability mass,
$D_{\mathrm{KL}}(\mu_t^{\mathrm H}\Vert p_t)=-\log Z_t$.
Therefore, imposing $Z_t\geq Z_{\min}$ guarantees
$D_{\mathrm{KL}}(\mu_t^{\mathrm H}\Vert p_t)\leq-\log Z_{\min}$. To enforce
this constraint adaptively, we fix the student cutoff $K_{\mathrm S}$ and,
for each candidate teacher cutoff $k$, define
\begin{equation}
F_t(k)
=
\operatorname{TopK}(q_t,k)
\cap
\operatorname{TopK}(p_t,K_{\mathrm S}),
\qquad
Z_t(k)
=
\sum_{v\in F_t(k)}p_t(v).
\label{eq:dynamic-candidate-set}
\end{equation}
We select the smallest teacher cutoff whose consensus support retains at
least $Z_{\min}$ student probability mass:
\begin{equation}
k_t^\star
=
\min
\left\{
k\in\{1,\ldots,K_{\mathrm T}^{\max}\}
:
Z_t(k)\geq Z_{\min}
\right\}.
\label{eq:dynamic-teacher-cutoff}
\end{equation}
In practice, we increase $k$ from $1$ to $K_{\mathrm T}^{\max}$ and stop at the first value satisfying the retained-
mass constraint. We then set $F_t=F_t(k_t^\star)$ and
$Z_t=Z_t(k_t^\star)$
If no feasible cutoff exists, we fall back to the student policy by setting
$\mu_t=p_t$ for that decoding step. 

\subsection{Mixed-Behavior Policy Optimization}
\textbf{Failure-Triggered Group Reconstruction.}
Following the GRPO procedure described in
Section~\ref{sec:preliminaries}, we invoke joint decoding only when all $n$
trajectories in a rollout group are incorrect. we sample one additional trajectory $y^{\mathrm J}$
using the hard joint-decoding policy. If $y^{\mathrm J}$ passes verification,
it replaces one incorrect trajectory in the original group.
Without loss of generality, let $y^{(n)}$ be the replaced trajectory. The
resulting mixed-behavior group and its reward set are
\begin{equation}
\widetilde{G}
=
\{y^{(1)},\ldots,y^{(n-1)},y^{\mathrm J}\},
\qquad
\widetilde{\mathcal R}
=
\{R^{(1)},\ldots,R^{(n-1)},R^{\mathrm J}\}.
\label{eq:reconstructed-group}
\end{equation}
For a successfully reconstructed group, the advantage of each trajectory is
computed as
\begin{equation}
A^{(i)}
=
\widetilde{R}^{(i)}
-
\operatorname{mean}\!\left(\widetilde{\mathcal R}\right),
\label{eq:mixed-group-advantage}
\end{equation}
where $\widetilde{R}^{(i)}=R^{(i)}$ for $i<n$,
$\widetilde{R}^{(n)}=R^{\mathrm J}$, and $A^{\mathrm J}:=A^{(n)}$. The
verified joint trajectory thus restores reward variation to a group that
would otherwise provide no group-relative learning signal.

\textbf{Mixed-Behavior Policy Optimization.}
The reconstructed group contains $n-1$ trajectories sampled from
$\pi_{\mathrm{old}}$ and one trajectory sampled from the joint behavior
policy. We optimize the student using the following clipped surrogate
objective, averaged over training problems:
\begin{equation}
\begin{aligned}
\mathcal{J}_{\mathrm{PICO}(\theta)}
=
\mathbb{E}_{x\sim\mathcal{D}}
\Bigg[
\frac{1}{\mathcal{Z}}
&\Bigg(
\underbrace{
\sum_{i=1}^{n-1}
\sum_{t=1}^{|y^{(i)}|}
\min\!\left(
r_{i,t}(\theta)A^{(i)},
\operatorname{clip}\!\left(
r_{i,t}(\theta),1-\epsilon,1+\epsilon
\right)A^{(i)}
\right)
}_{\text{student trajectories}}
\\
&\quad+
\underbrace{
\sum_{t=1}^{|y^{\mathrm J}|}
\min\!\left(
\widehat r_{\mathrm J,t}(\theta)A^{\mathrm J},
\operatorname{clip}\!\left(
\widehat r_{\mathrm J,t}(\theta),1-\epsilon,1+\epsilon
\right)A^{\mathrm J}
\right)
}_{\text{joint trajectory}}
\Bigg)
\Bigg],
\end{aligned}
\label{eq:hype-objective}
\end{equation}
where
$\mathcal{Z}=\sum_{i=1}^{n-1}|y^{(i)}|+|y^{\mathrm J}|$ is the total number
of response tokens in the reconstructed group and $\widehat r_{\mathrm J,t}(\theta)$ is the off-policy
importance ratio for the joint trajectory
\begin{equation}
\widehat r_{\mathrm J,t}(\theta)
=
\frac{
\pi_\theta\!\left(y_t^{\mathrm J}\mid x,y_{<t}^{\mathrm J}\right)
}{
\mu_t\!\left(y_t^{\mathrm J}\right)
}
=
Z_t
\frac{
\pi_\theta\!\left(y_t^{\mathrm J}\mid x,y_{<t}^{\mathrm J}\right)
}{
\pi_{\mathrm{old}}\!\left(y_t^{\mathrm J}\mid x,y_{<t}^{\mathrm J}\right)
}.
\label{eq:joint-importance-ratio}
\end{equation}
At a fallback step, $\mu_t=p_t$ and $Z_t=1$, so
$\widehat r_{\mathrm J,t}(\theta)$ reduces to the standard policy ratio. This
importance ratio accounts for the token-level behavior-policy mismatch
introduced by joint decoding.

\section{Experiments}
\begin{figure*}[t]
    \centering
    \includegraphics[
        width=\textwidth,
        trim={0 10pt 0 0},
        clip
    ]{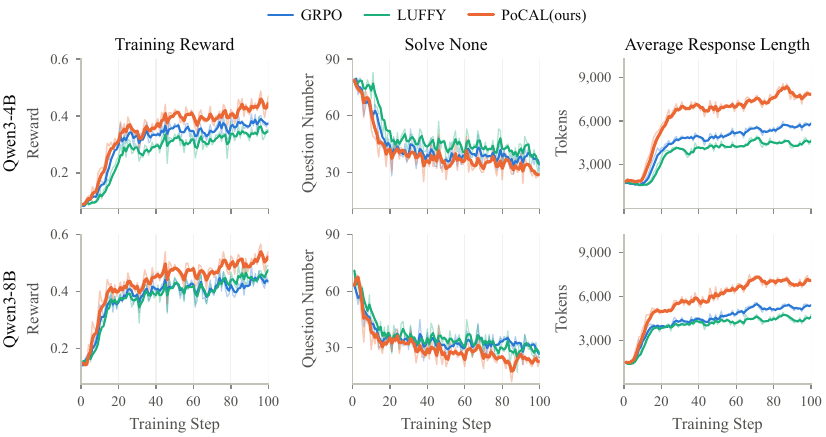}
    \caption{
    Training dynamics of Qwen3-4B-Base and Qwen3-8B-Base.
    We report the training reward, solve-none question number, and average response
    length throughout training.
    }
    \label{fig:train-dynamics}
\end{figure*}

\subsection{Experimental Setup}
\label{subsec:experimental-setup}
\textbf{Models and Datasets.} In our main experiments, we consider three student policies with different model scales and instruction-tuning configurations: Qwen3-4B-Base, Qwen3-8B-Base, and Qwen3-4B-Instruct\citep{yang2025qwen3}. We use Qwen3-4B-Instruct specifically for the agentic-task experiments. Across all settings, Qwen3-4B-Instruct, conditioned on expert solutions, serves as the teacher model. For mathematical and general reasoning tasks, we use DAPO-Math-17K\citep{yu2026dapo} as the training set, which contains a large collection of challenging problems paired with high-quality expert demonstrations. For agentic tasks, we conduct experiments on ALFWorld and ScienceWorld, two widely used interactive reasoning benchmarks.


\textbf{Baselines.} We compare \Mymethod{} against GRPO and three representative off-policy methods , as well as a self-distillation baseline. Specifically, LUFFY\citep{yan2025luffy} combines on-policy rollouts with off-policy demonstrations through mixed-policy GRPO and regularized policy shaping; Prefix-RFT\citep{huang2026prefixrft} constructs hybrid trajectories by pairing prefixes sampled from expert demonstrations with policy-generated continuations; and ReLIFT\citep{ma2026relift} interleaves reinforcement learning with targeted online supervised fine-tuning on challenging problems. We additionally compare with RLRT\citep{kim2026rlrt}, a self-distillation method that uses reversed teacher signals to emphasize student-preferred tokens in verified successful trajectories.

\textbf{Training Details. } For a fair comparison, we use the same training configuration for all RL methods. We remove the KL regularization term by setting $\beta=0$ and omit standard-deviation normalization in the advantage computation to avoid implicitly reweighting problems according to their within-group reward variance. We fix the total rollout group size at 8 for every query. For \Mymethod{} and all off-policy baselines, each mixed group consists of 7 on-policy trajectories and 1 off-policy trajectory. We use a global training batch size of 128, a PPO mini-batch size of 64, and a maximum response length of $20{,}480$ tokens. Following DAPO, we adopt asymmetric clipping with $\epsilon_{\mathrm{high}}=0.28$ and $\epsilon_{\mathrm{low}}=0.20$. The complete training configuration is provided in Appendix~\ref{training-details}.

\textbf{Evaluations.} Following prior work, we evaluate primarily on four widely used and challenging mathematical reasoning benchmarks: AIME 2025, AIME 2026, HMMT 2026, and MATH-500. During evaluation, we set the sampling temperature to $0.6$ and the maximum response length to $20{,}480$ tokens. To reduce the effect of sampling variance, we independently sample 16 responses for each problem and report the average accuracy across these samples (avg@16). To assess out-of-distribution generalization beyond the mathematics-focused training domain, we additionally evaluate on GPQA-Diamond and Humanity's Last Exam (HLE). Appendix~\ref{evaluation-benchmarks} provides additional information about the evaluation benchmarks.

\begin{table*}[t]
\centering

\caption{
Overall performance on in-domain and out-of-domain benchmarks, reported as avg@16. \textbf{Bold} and \underline{underlined} values indicate the best and second-best results, respectively. $\triangle$ denotes the absolute performance gain of our method over GRPO. Pass@$k$ results are reported in Table~\ref{tab:pass16-results}.}
\label{tab:overall-performance}

\footnotesize
\setlength{\tabcolsep}{6pt}

\begin{tabularx}{\textwidth}{
    @{}
    l
    *{7}{>{\centering\arraybackslash}X}
    @{}
}
\toprule

\multirow{2}{*}{\textbf{Method}}
& \multicolumn{4}{c}{\textbf{In-domain (\%)}}
& \multicolumn{2}{c}{\textbf{Out-of-domain (\%)}}
& \multirow{2}{*}{\textbf{Avg.}} \\

\cmidrule(lr){2-5}
\cmidrule(lr){6-7}

& \textbf{AIME25}
& \textbf{AIME26}
& \textbf{HMMT26}
& \textbf{MATH500}
& \textbf{GPQA-D}
& \textbf{HLE}
& \\

\midrule
\multicolumn{8}{c}{\textbf{Qwen3-4B-Base}} \\
\midrule
\textbf{Base}& 6.9 & 8.1 & 3.6 & 67.8 & 23.5 & 3.0 & 18.8  \\
\textbf{GRPO}& 21.7 & 14.4 & 10.0 & \underline{84.6} & 34.3 & 2.2 & 27.9 \\
\textbf{LUFFY}& \underline{22.1} & 18.1 & 14.0 & 84.2 & \underline{36.6} & 3.8 & \underline{29.8} \\
\textbf{Prefix-RFT}& 15.4 & 12.1 & 7.9 & 79.8 & 34.3 & 3.4 & 25.5 \\
\textbf{ReLIFT}& 10.6 & 10.2 & 6.8  & 76.2 & 31.5 & 3.8 & 23.2 \\
\textbf{RLRT}& 18.5 & \underline{19.8}  & \textbf{15.9} & 83.8 & 34.7  & \underline{4.2} & 29.5 \\
\rowcolor[HTML]{E8F3F1}
\textbf{\Mymethod{}}& \textbf{23.1} & \textbf{22.3}  & \textbf{15.9}  & \textbf{87.6} & \textbf{37.7} & \textbf{5.2} & \textbf{32.0}  \\

\rowcolor[HTML]{E8F3F1}
$\triangle$& +1.4 & +7.9 & +5.9 & +3.0 & +3.4 & +3.0 & +4.1 \\

\midrule

\multicolumn{8}{c}{\textbf{Qwen3-8B-Base}} \\

\midrule

\textbf{Base}& 10.2 &7.5  &4.4  &74.8  &30.1  &4.4  & 21.9 \\
\textbf{GRPO}& 22.7 & 18.9 &14.6  &86.4  & 44.4 &3.8  & 31.8 \\
\textbf{LUFFY}& \underline{24.4} & \underline{21.9} &14.9  & \underline{88.2}  & \underline{46.7}  & \underline{5.2}  & \underline{33.6} \\
\textbf{Prefix-RFT}& 23.9 & 21.0 &12.1  &85.8  &42.6  &3.4  & 31.5 \\
\textbf{ReLIFT}& 21.7  & 21.0 & 14.8 & 87.8 & 41.1 & 3.8 & 31.7 \\
\textbf{RLRT}& 18.8 &21.9  & \underline{15.9}  &84.4  & 45.7 & 4.6 & 31.9 \\
\rowcolor[HTML]{E8F3F1}
\textbf{\Mymethod{}}& \textbf{25.6} & \textbf{23.3}  & \textbf{18.9}  & \textbf{90.4}  & \textbf{48.8}  & \textbf{5.6}  & \textbf{35.4}  \\

\rowcolor[HTML]{E8F3F1}
$\triangle$& +2.9 & +4.4 & +4.3 & +4.0 & +4.4 & +1.8 & +3.6 \\

\bottomrule
\end{tabularx}

\end{table*}

\subsection{Main Results}
\textbf{Performance Comparison.}
As shown in Table~\ref{tab:overall-performance}, \Mymethod{} consistently achieves the strongest overall performance across both model scales. It improves the average score over GRPO by $4.1$ percentage points with Qwen3-4B-Base and by $3.6$ points with Qwen3-8B-Base, while outperforming the strongest competing method by $2.2$ and $1.8$ points, respectively. The gains are particularly pronounced on challenging benchmarks such as AIME26 and HMMT26, supporting our claim that \Mymethod{} enables effective learning from difficult problems that frequently produce all-incorrect rollout groups. Moreover, the consistent improvements on GPQA-Diamond and HLE demonstrate that the learned reasoning capability generalizes beyond the mathematics-focused training distribution. Since the teacher is unavailable during evaluation, these gains reflect improvements internalized by the student rather than reliance on privileged guidance.

\textbf{Training Dynamics.}
Figure~\ref{fig:train-dynamics} shows that \Mymethod{} maintains higher training rewards and consistently fewer solve-none questions than GRPO and LUFFY across both model scales. While the baselines tend to plateau after the initial stage, \Mymethod{} continues to improve, indicating that failure-triggered joint decoding converts otherwise uninformative all-incorrect groups into useful learning signals. \Mymethod{} also produces longer responses, which, together with the higher verified rewards and lower solve-none counts, suggests more sustained reasoning on difficult problems. Overall, these results show that \Mymethod{} improves difficult-problem exploration without compromising training stability. Additional analyses in Appendix~\ref{app:traing-curves} and Fig.\ref{fig:rescue-dynamics} show that joint decoding consistently rescues questions for which all student self-sampled trajectories fail, thereby restoring effective learning signals for otherwise uninformative groups.

\subsection{Agentic Tasks}
We conduct the agentic-task experiments using Qwen3-4B-Instruct as both the student policy and the teacher providing token-level guidance. As shown in Table~\ref{tab:agentic-results}, \Mymethod{} consistently outperforms all baselines across ALFWorld and ScienceWorld, achieving the highest average success rate of $74.3\%$. Compared with GRPO, \Mymethod{} improves the average performance by $6.8$ percentage points, demonstrating its effectiveness in interactive environments characterized by long-horizon decision-making and sparse rewards. It also surpasses the off-policy RL method LUFFY and the prefix-guided method Prefix-RFT by $4.9$ and $5.2$ points on average, respectively, further validating the effectiveness of token-level joint guidance for agentic reasoning.

\begin{table}[t]
\centering

\caption{
Performance on agentic reasoning benchmarks based on
Qwen3-4B-Instruct.
\textbf{Bold} and \underline{underlined} values indicate the best
and second-best results, respectively.
$\triangle$ denotes the absolute performance gains, in percentage
points, of our method over GRPO.
}
\label{tab:agentic-results}

\footnotesize
\setlength{\tabcolsep}{10pt}

\begin{tabular}{@{}lcccc@{}}
\toprule

\multirow{2}{*}{\textbf{Method}}
& \multicolumn{2}{c}{\textbf{ALFWorld (\%)}}
& \multirow{2}{*}{\textbf{ScienceWorld (\%)}} 
& \multirow{2}{*}{\textbf{Avg.}}\\

\cmidrule(lr){2-3}

& \textbf{Seen}
& \textbf{Unseen}
& \\

\midrule

\multicolumn{4}{c}{\textbf{Qwen3-4B-Instruct}} \\

\midrule

\textbf{GRPO} & 80.7 & 71.6 & 50.3 & 67.5 \\
\textbf{LUFFY} & 81.4 & \underline{74.6} & 52.1 & \underline{69.4} \\
\textbf{Prefix-RFT} & \underline{82.1} & 72.4 & \underline{52.9} & 69.1 \\

\rowcolor[HTML]{E8F3F1}
\textbf{\Mymethod{}} & \textbf{88.6} & \textbf{79.1} & \textbf{55.3} & \textbf{74.3} \\

\rowcolor[HTML]{E8F3F1}
$\triangle$ & $+7.9$ & $+7.5$ & $+5.0$ & $+6.8$ \\

\bottomrule
\end{tabular}

\end{table}

\begin{table*}[t]
\centering

\caption{
Ablation study of \Mymethod{} on four mathematical reasoning
benchmarks using Qwen3-4B-Base.
``w/o Joint Decoding'' directly generates responses from the teacher
policy.
The fixed-$k$ variants replace the dynamic top-$k$ mechanism with a
fixed teacher candidate-set size.
``w/o IS Ratio'' removes the importance-sampling ratio from policy
optimization.
\textbf{Bold} and \underline{underlined} values indicate the best
and second-best results, respectively.
}
\label{tab:ablation-study}

\footnotesize
\setlength{\tabcolsep}{9pt}

\begin{tabular}{@{}lccccc@{}}
\toprule

\textbf{Method / Variant}
& \textbf{AIME25}
& \textbf{AIME26}
& \textbf{HMMT26}
& \textbf{MATH500}
& \textbf{Avg.} \\

\midrule

\multicolumn{6}{c}{\textbf{Qwen3-4B-Base}} \\

\midrule

\textbf{GRPO} & 21.7 & 14.4 & 10.0 & 84.6 & 32.7 \\

\rowcolor[HTML]{E8F3F1}
\textbf{\Mymethod{}} & \underline{23.1}  & \textbf{22.3} & \underline{15.9} & \textbf{87.6} & \textbf{37.2} \\

\midrule

\textbf{w/o Joint Decoding} & 20.6 & 16.9 & 14.1 & 83.6 & 33.8 \\

\addlinespace[2pt]
\multicolumn{6}{@{}l}{\textbf{w/o Dynamic Top-$k$}} \\

\hspace{1em}Fixed $k=5$ & \textbf{23.3} & 16.9 & 12.7 & 84.2  & 34.3 \\

\hspace{1em}Fixed $k=10$ & 23.1 & \underline{22.1} & 13.4 & \underline{84.8}  & \underline{35.9} \\

\hspace{1em}Fixed $k=20$ & 21.7  & 18.3  & \textbf{17.1} & 84.0  & 35.3 \\

\addlinespace[2pt]
\textbf{w/o Off-Policy IS Ratio} & 21.7 & 19.4 & 15.2 & \underline{84.8} & 35.3 \\

\bottomrule
\end{tabular}

\end{table*}

\subsection{Ablation Study}
We analyze the contribution of each component of \Mymethod{}. As shown in Table~\ref{tab:ablation-study}, the results highlight the importance of maintaining compatibility between guided trajectories and the student policy. Directly sampling from the teacher without joint decoding introduces the largest behavior mismatch and yields the weakest ablation result, reducing the average score from $37.2\%$ to $33.8\%$. The fixed-$k$ variants further reveal a trade-off between teacher guidance and student compatibility. Increasing $k$ retains more student-compatible tokens but weakens the selectivity of teacher guidance, potentially reducing rollout accuracy. As long as the candidate set remains sufficiently accurate, increasing $k$ improves learning, as evidenced by the gain from $k=5$ to $k=10$. However, further increasing $k$ to $20$ weakens teacher guidance and slightly reduces performance, making $k=10$ the best fixed setting. In contrast, our dynamic top-$k$ strategy adapts the candidate set at each decoding step according to the retained student probability mass, achieving a better balance and the highest average score of $37.2\%$. Finally, removing the off-policy importance-sampling ratio reduces performance to $35.3\%$, confirming the importance of explicitly accounting for the remaining behavior-policy mismatch during optimization.

\section{Related Work}
\subsection{On- and Off-Policy Learning}
Recent studies show that on-policy optimization is fundamentally bounded by the current policy’s exploration capacity, often causing learning to stall on challenging problems \citep{yue2025does}. To overcome this exploration bottleneck, a growing line of work incorporates off-policy expert trajectories into RLVR training\citep{dong2026rlplus,liu2025superrl}. LUFFY\citep{yan2025luffy} combines expert demonstrations with on-policy rollouts through mixed-policy optimization, while ReLIFT\citep{ma2026relift} interleaves reinforcement learning with supervised fine-tuning on high-quality solutions to challenging problems. SRFT and Chord further unifies supervised learning from expert demonstrations and reinforcement learning from self-generated rollouts within a single training stage\citep{fu2026srft,zhang2026chord}. 
However, because expert trajectories may lie in low-probability regions under the current student policy, directly optimizing on them can induce substantial distribution mismatch, potentially leading to excessive policy drift and unstable optimization \citep{agrawal2026offgrpo,setlur2026reuse,zheng2026prosperity,ICLR2026_donotlow}. In contrast, \Mymethod{} uses expert trajectories to guide distribution-constrained joint decoding, producing successful rollouts that remain compatible with the student policy.

\subsection{Expert-Guided Exploration and Decoding}
Rather than directly optimizing on off-policy demonstrations, another line of work uses expert information to guide rollout generation\citep{jiang2026selectiveguidance,zhang2025critiquegrpo,liu2026em2po}. Prefix-RFT\citep{huang2026prefixrft} conditions rollouts on prefixes sampled from expert demonstrations, while BREAD and StepHint use expert-derived prefixes or intermediate hints to steer exploration toward promising reasoning states\citep{zhang2026bread,zhang2026stephint}. OC-GRPO\citep{agrawal2026offgrpo} further applies importance sampling ratio to account for the mismatch between guided rollouts and the original unguided policy. Although these methods facilitate successful exploration, injecting guidance into the rollout context may teach the student to continue from expert-induced states without learning how to reach those states independently from the original prompt\citep{qu2026pope,ross2011dagger,zhang2026scaf}. A related line of work combines or contrasts the token distributions of models with different capabilities to steer generation toward higher-quality outputs\citep{liu2021dexperts,li2023contrastivedecoding,dekoninck2024controlledgeneration}.
The effectiveness of these methods motivates us to incorporate teacher guidance at the token level. Accordingly, \Mymethod{} employs distribution-constrained joint decoding to construct policy-compatible trajectories.

\section{Conclusion}
\label{sec:conclusion}

In this work, we introduced \Mymethod{}, a policy-compatible reinforcement learning framework for overcoming the exploration bottleneck of RLVR on challenging problems. \Mymethod{} leverages expert data to generate verified trajectories while controlling their deviation from the student policy, and selectively incorporates these trajectories into all-failure groups to restore informative learning signals. Extensive experiments across mathematical reasoning, general reasoning, and agentic tasks demonstrate that \Mymethod{} consistently outperforms standard RLVR and representative expert-guided baselines across different model scales, without requiring teacher guidance at inference time. Ablation studies further validate the effectiveness of each component. Overall, our results highlight policy compatibility as a key principle for effectively incorporating expert knowledge into RLVR, providing a practical approach to improving models beyond the limits of unaided on-policy exploration.

\bibliographystyle{iclr2027_conference}
\bibliography{reference}

\newpage

\subsection*{AI use statement}
In this work, we used generative AI tools to assist in the writing of proofs and with translation. We did not use generative AI tools to formulate mathematical claims, provide critical ingredients for proving mathematical claims, propose or refine hypotheses, design or provide feedback on research methodology or experiments, implement methods, support qualitative or thematic data analysis, or interpret results. Generating synthetic datasets, developing theoretical models or conceptual frameworks, and cleaning or reformatting datasets were not applicable to this work. Additionally, we used generative AI tools to create or modify scientific figures or images, draft parts of the manuscript, support brainstorming, and source or search for information. We have reviewed all AI-assisted work. In particular, the authors manually verified every step of the AI-assisted proofs; checked translations against the original text to preserve their technical meaning; inspected AI-assisted figures and images for consistency with the underlying results; and carefully reviewed and revised all AI-assisted manuscript text. Ideas and information obtained through AI-assisted brainstorming or search were treated only as preliminary suggestions and were independently validated through manual literature review and consultation of the original sources. We take responsibility for the final content of this work, including text, claims, or artifacts produced with the aid of generative AI.

\subsection*{Ethics statement}
This work does not involve human subjects, personally identifiable or sensitive information, or the collection and release of new datasets. All experiments are conducted using publicly available models, datasets, and benchmarks for research purposes. Our method focuses on improving reinforcement learning with verifiable rewards and is not designed for high-risk or directly harmful applications.

\subsection*{Reproducibility Statement}

To support reproducibility, Section~\ref{method} provides a complete specification of the proposed method, while Section~\ref{subsec:experimental-setup} summarizes the main experimental setup. Appendix~\ref{appendix-experiments} provides further details on the training procedure, benchmark construction, data processing, and evaluation configurations, with the complete training hyperparameters reported in Tables~\ref{tab:implementation-configuration} and \ref{tab:agentic-implementation-configuration}. All prompts used for training and evaluation are provided in Appendix~\ref{appendix-prompts}.
\appendix

\section{Methodological Supplements}
\label{appendix-method}
\subsection{Theoretical Proofs}
\label{theoretical-proofs}

This section provides the derivations underlying the joint-decoding policy
and makes explicit the guarantees induced by the retained-mass constraint.
We assume that the student and teacher share a tokenizer and vocabulary
$\mathcal V$. Since $p_t$ is obtained from a softmax distribution, we have
$p_t(v)>0$ for every $v\in\mathcal V$. We also assume a fixed tie-breaking
rule for top-$k$ selection, so the selected sets are nested as $k$ increases.
The retained-mass threshold satisfies $Z_{\min}\in(0,1]$. All token-level
statements below are conditioned on a fixed query, privileged guidance, and
generated prefix; we suppress this conditioning when it is clear from
context.

\subsubsection{Optimization Problem Formulation and Closed-Form Solution}
\label{appendix-joint-optimization}

For a nonempty consensus set $F_t$, Eq.~\eqref{eq:joint-policy-optimization}
can be written as
\begin{equation}
\min_{\mu\in\Delta(F_t)}
\sum_{v\in F_t}\mu(v)
\log\frac{\mu(v)}{p_t(v)}
-\lambda\sum_{v\in F_t}\mu(v)q_t(v).
\label{eq:appendix-joint-objective}
\end{equation}
Introducing a Lagrange multiplier $\eta$ for the normalization constraint
gives
\begin{equation}
\mathcal L(\mu,\eta)
=
\sum_{v\in F_t}\mu(v)
\log\frac{\mu(v)}{p_t(v)}
-\lambda\sum_{v\in F_t}\mu(v)q_t(v)
+\eta\left(\sum_{v\in F_t}\mu(v)-1\right).
\end{equation}
At the optimum, stationarity with respect to each $\mu(v)$ yields
\begin{equation}
\log\frac{\mu(v)}{p_t(v)}+1-\lambda q_t(v)+\eta=0,
\end{equation}
and hence
\begin{equation}
\mu(v)=p_t(v)\exp\!\left(\lambda q_t(v)-1-\eta\right).
\end{equation}
Normalizing over $F_t$ gives
\begin{equation}
\mu_t^\lambda(v)
=
\frac{p_t(v)\exp(\lambda q_t(v))\mathbf 1\{v\in F_t\}}
{\sum_{u\in F_t}p_t(u)\exp(\lambda q_t(u))},
\end{equation}
which is Eq.~\eqref{eq:soft-joint-policy}. The KL term is strictly convex in
$\mu$ on $\Delta(F_t)$, while the teacher-confidence term is linear;
therefore, this stationary point is the unique global minimizer.

\subsubsection{Proof of the KL Decomposition}
\label{appendix-kl-proof}

Let $Z_t=\sum_{v\in F_t}p_t(v)$ and define the student distribution
conditioned on the consensus support as
\begin{equation}
\mu_t^0(v)
=
\frac{p_t(v)\mathbf 1\{v\in F_t\}}{Z_t}.
\end{equation}
For any distribution $\mu$ supported on $F_t$,
\begin{align}
D_{\mathrm{KL}}(\mu\Vert p_t)
&=
\sum_{v\in F_t}\mu(v)
\log\frac{\mu(v)}{p_t(v)}
\nonumber\\
&=
\sum_{v\in F_t}\mu(v)
\log\frac{\mu(v)}{p_t(v)/Z_t}
-\log Z_t
\nonumber\\
&=
D_{\mathrm{KL}}(\mu\Vert\mu_t^0)-\log Z_t.
\label{eq:appendix-general-kl-decomposition}
\end{align}
Substituting $\mu=\mu_t^\lambda$ proves
Proposition~\ref{prop:kl-decomposition}. Since KL divergence is
nonnegative, the minimum divergence over all distributions supported on
$F_t$ is $-\log Z_t$, attained uniquely by $\mu_t^0$. Thus, once the support
has been fixed, soft teacher reweighting cannot decrease the distribution
shift from the student; for every $\lambda>0$, it increases the shift
strictly unless $q_t$ is constant on $F_t$.

This last statement can also be seen directly. Let
$m_t(\lambda)=\mathbb E_{v\sim\mu_t^\lambda}[q_t(v)]$. Standard properties
of exponential families give
\begin{equation}
\frac{d}{d\lambda}m_t(\lambda)
=
\operatorname{Var}_{v\sim\mu_t^\lambda}[q_t(v)]\geq 0,
\end{equation}
and
\begin{equation}
\frac{d}{d\lambda}
D_{\mathrm{KL}}(\mu_t^\lambda\Vert\mu_t^0)
=
\lambda\operatorname{Var}_{v\sim\mu_t^\lambda}[q_t(v)]\geq0.
\label{eq:appendix-soft-shift-monotonicity}
\end{equation}
Consequently, if $q_t$ is nonconstant on $F_t$, the expected teacher
confidence is strictly increasing on $[0,\infty)$, while the additional KL
cost is strictly increasing on $(0,\infty)$ and has its unique minimum at
$\lambda=0$. Our hard policy selects this minimum-shift endpoint and uses the
teacher only to determine the support.

\paragraph{Trajectory-level consequence.}
The per-token retained-mass constraint also controls the divergence of the
complete autoregressive behavior policy. This is formalized below.

\begin{proposition}[Trajectory-level KL bound]
\label{prop:trajectory-kl-bound}
Let $\mu^{\mathrm J}(y\mid x,g)$ be the autoregressive joint-decoding policy
and let $\pi_{\mathrm{old}}(y\mid x)$ be the rollout-time student policy. If
every guided step satisfies $Z_t\geq Z_{\min}$, every infeasible step falls
back to $p_t$, and generation has horizon $T_{\max}$, then
\begin{equation}
D_{\mathrm{KL}}
\!\left(
\mu^{\mathrm J}(\cdot\mid x,g)
\Vert
\pi_{\mathrm{old}}(\cdot\mid x)
\right)
\leq
T_{\max}(-\log Z_{\min}).
\label{eq:trajectory-kl-bound}
\end{equation}
Here, variable-length responses are represented on the fixed horizon by
treating the end-of-sequence state as absorbing and setting the two policies
equal after termination. More precisely, the divergence equals the expected
cumulative per-step cost
$\mathbb E_{y\sim\mu^{\mathrm J}}[\sum_{t=1}^{T_{\max}}-\log Z_t]$, where
$Z_t=1$ at fallback and post-termination steps.
\end{proposition}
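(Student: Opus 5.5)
The argument is the chain rule for KL divergence between two autoregressive sequence distributions on the fixed horizon $T_{\max}$, combined with the per-step identity from Proposition~\ref{prop:kl-decomposition}.

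\textbf{Step 1: fixed-horizon factorization.} Represent each response as a length-$T_{\max}$ sequence, padded with an absorbing end-of-sequence token. Both policies then factor as products of per-step conditionals,
\begin{equation*}
\mu^{\mathrm J}(y\mid x,g)=\prod_{t=1}^{T_{\max}}\mu_t(y_t), \qquad \pi_{\mathrm{old}}(y\mid x)=\prod_{t=1}^{T_{\max}}p_t(y_t),
\end{equation*}
where $\mu_t$ and $p_t$ are evaluated at the prefix $y_{<t}$. After termination, both conditionals are the same point mass, so those steps contribute nothing.

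\textbf{Step 2: chain rule.} The log-ratio telescopes into a sum of per-step log-ratios. Taking the expectation under $\mu^{\mathrm J}$ and applying the tower property step by step gives
\begin{equation*}
D_{\mathrm{KL}}\big(\mu^{\mathrm J}\Vert\pi_{\mathrm{old}}\big)=\mathbb E_{y\sim\mu^{\mathrm J}}\Big[\sum_{t=1}^{T_{\max}}D_{\mathrm{KL}}\big(\mu_t\Vert p_t\big)\Big],
\end{equation*}
where each per-step divergence is conditioned on the realized prefix.

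Absolute continuity must be checked here. Every $\mu_t$ is supported on $F_t$ or equals $p_t$, and $p_t(v)>0$ for all $v$, so $\mu^{\mathrm J}\ll\pi_{\mathrm{old}}$. All log-ratios are therefore finite. Since $T_{\max}$ is finite and $\mathcal V$ is finite, the sample space is finite, and no integrability issues arise.

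\textbf{Step 3: per-step cost.} At a guided step, $\mu_t=\mu_t^{\mathrm H}=\mu_t^0$. Proposition~\ref{prop:kl-decomposition} at $\lambda=0$, or equivalently Eq.~\eqref{eq:appendix-general-kl-decomposition} with $\mu=\mu_t^0$, gives $D_{\mathrm{KL}}(\mu_t\Vert p_t)=-\log Z_t$. At fallback and post-termination steps, $\mu_t=p_t$, so the divergence is $0=-\log 1$, consistent with the convention $Z_t=1$. This establishes the stated equality
\begin{equation*}
D_{\mathrm{KL}}\big(\mu^{\mathrm J}\Vert\pi_{\mathrm{old}}\big)=\mathbb E\Big[\sum_{t=1}^{T_{\max}}-\log Z_t\Big].
\end{equation*}

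\textbf{Step 4: bound.} Each guided step satisfies $Z_t\ge Z_{\min}$, and every other step has $Z_t=1\ge Z_{\min}$ because $Z_{\min}\le 1$. Hence every term satisfies $-\log Z_t\le-\log Z_{\min}$ pointwise. Summing over the $T_{\max}$ steps and taking expectations yields $T_{\max}(-\log Z_{\min})$.

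\textbf{Main difficulty.} The hardest part is Step 1, specifically the bookkeeping for variable-length responses. One must check that padding with an absorbing end-of-sequence state, with both policies equal after termination, is a bijective re-encoding. It must change neither the original sequence-level distributions nor the KL divergence between them; this holds because the padded steps carry probability one under both policies. A second subtlety is that $Z_t$ and $F_t$ are random, depending on the prefix. This is why the equality must be stated inside the expectation. The bound is unaffected, since it holds pointwise for every realized prefix.
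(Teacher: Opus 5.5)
Your proposal is correct and matches the paper's proof step for step. Both arguments apply the chain rule for KL over the autoregressive factorization, use the per-step identity $D_{\mathrm{KL}}(\mu_t^0\Vert p_t)=-\log Z_t$ from the decomposition, assign zero cost to fallback and post-termination steps, and sum the pointwise bound $-\log Z_t\le-\log Z_{\min}$ over $T_{\max}$ steps; your explicit checks of absolute continuity (via $p_t>0$) and of $Z_{\min}\le 1$ for non-guided steps are details the paper leaves implicit.
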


\begin{proof}
By the chain rule for KL divergence between autoregressive distributions,
\begin{align}
D_{\mathrm{KL}}(\mu^{\mathrm J}\Vert\pi_{\mathrm{old}})
&=
\mathbb E_{y\sim\mu^{\mathrm J}}
\left[
\sum_{t=1}^{T_{\max}}
D_{\mathrm{KL}}
\!\left(
\mu_t(\cdot\mid y_{<t})
\Vert
p_t(\cdot\mid y_{<t})
\right)
\right]
\nonumber\\
&=
\mathbb E_{y\sim\mu^{\mathrm J}}
\left[
\sum_{t=1}^{T_{\max}}-\log Z_t
\right].
\label{eq:trajectory-kl-chain-rule}
\end{align}
The second equality follows from
Eq.~\eqref{eq:appendix-general-kl-decomposition} at $\mu=\mu_t^0$.
A fallback step has $\mu_t=p_t$ and therefore contributes zero. At every
guided step, $Z_t\geq Z_{\min}$ implies
$-\log Z_t\leq-\log Z_{\min}$. Summing over the horizon gives
Eq.~\eqref{eq:trajectory-kl-bound}.
\end{proof}

\subsubsection{Importance-Sampling Correction}
\label{appendix-importance-sampling}

At a fixed prefix $h_t=(x,y_{<t})$, for any sampled token $v\in F_t(h_t)$,
the hard policy satisfies
\begin{equation}
\frac{\pi_\theta(v)}{\mu_t^{\mathrm H}(v)}
=
Z_t\frac{\pi_\theta(v)}{p_t(v)},
\end{equation}
which gives Eq.~\eqref{eq:joint-importance-ratio}. The factor $Z_t$ exactly
corrects the renormalization of the student probabilities on $F_t$. More
formally, for any bounded token-level quantity $f$,
\begin{equation}
\mathbb E_{v\sim\mu_t^{\mathrm H}}
\left[
\frac{\pi_\theta(v)}{\mu_t^{\mathrm H}(v)}f(v)
\right]
=
\sum_{v\in F_t}\pi_\theta(v)f(v).
\label{eq:support-restricted-is}
\end{equation}
Thus, conditional on the prefix, the action-level correction is exact on the
retained support. As with any hard-support behavior policy, it cannot recover
contributions from tokens outside $F_t$. At data collection, when
$\pi_\theta=p_t$, the magnitude of the omitted contribution is bounded by
\begin{equation}
\left|
\mathbb E_{v\sim p_t}[f(v)]
-
\mathbb E_{v\sim\mu_t^{\mathrm H}}[Z_t f(v)]
\right|
\leq
\lVert f\rVert_\infty(1-Z_t)
\leq
\lVert f\rVert_\infty(1-Z_{\min}).
\label{eq:support-bias-bound}
\end{equation}
For the rollout-time student, the retained-mass constraint therefore controls
both the KL shift and the residual support truncation. Moreover, at data
collection the IS factor is the token-independent constant $Z_t$, avoiding
the additional within-step weight variance that would arise from soft
teacher reweighting.

Equation~\eqref{eq:support-restricted-is} is a conditional, action-level
identity; it does not convert prefixes sampled under $\mu^{\mathrm J}$ into
prefixes sampled under $\pi_\theta$. A fully trajectory-corrected estimator
would require cumulative importance ratios and target-policy absolute
continuity, $\pi_\theta\ll\mu^{\mathrm J}$, over complete trajectories. This
condition fails whenever the hard support restriction removes a token to
which $\pi_\theta$ assigns positive probability. In addition, PPO clipping
deliberately introduces bias into the surrogate. Accordingly,
Eq.~\eqref{eq:hype-objective} should be understood as a mixed-behavior
surrogate with exact per-token probability ratios on sampled actions, rather
than an unbiased estimator of the original on-policy RLVR objective.

\subsection{Algorithmic Procedure}
\label{algorithmic-procedure}

Algorithm~\ref{alg:picorl} gives the complete failure-triggered training
procedure. The teacher is frozen throughout training, and $\pi_{\mathrm{old}}$
is held fixed during rollout collection for a policy update. The privileged
guidance $g(x)$ is supplied only to the teacher; the student always evaluates
the original query and the jointly generated response prefix.

\begin{algorithm}[t]
\caption{\Mymethod{}: Policy-Compatible Reinforcement Learning Framework}
\label{alg:picorl}
\small
\begin{algorithmic}[1]

\Require Training set $\mathcal D=\{(x,g_x)\}$; student $\pi_\theta$;
teacher $\pi_{\mathrm T}$; verifier $R$; batch size $B$; group size $n$;
$K_{\mathrm S}$; $K_{\mathrm T}^{\max}$; $Z_{\min}$; $T_{\max}$;
$\epsilon_{\mathrm{low}}$; $\epsilon_{\mathrm{high}}$

\Function{JointDecode}{$x,g_x,\pi_{\mathrm{old}},\pi_{\mathrm T}$}
    \State $y\gets()$
    \While{$y$ is not terminated and $|y|<T_{\max}$}
        \State $p_t\gets\pi_{\mathrm{old}}(\cdot\mid x,y)$,
        \quad
        $q_t\gets\pi_{\mathrm T}(\cdot\mid x,g_x,y)$

        \State $F_t(k)\gets
        \operatorname{TopK}(q_t,k)
        \cap
        \operatorname{TopK}(p_t,K_{\mathrm S})$

        \State $Z_t(k)\gets
        \sum_{v\in F_t(k)}p_t(v)$,
        \quad $k=1,\ldots,K_{\mathrm T}^{\max}$

        \State $\mathcal K_t\gets
        \{k:Z_t(k)\geq Z_{\min}\}$

        \If{$\mathcal K_t\neq\varnothing$}
            \State $k_t^\star\gets\min\mathcal K_t$
            \Comment{dynamic top-$k$}

            \State $F_t\gets F_t(k_t^\star)$,
            \quad
            $Z_t\gets Z_t(k_t^\star)$

            \State $\mu_t(v)\gets
            \dfrac{
                p_t(v)\mathbf 1\{v\in F_t\}
            }{
                Z_t
            }$
            \Comment{preserve student preferences}
        \Else
            \State $\mu_t\gets p_t$
        \EndIf

        \State Sample $y_t\sim\mu_t$ and append $y_t$ to $y$
    \EndWhile
    \State \Return $y$ and $\{\mu_t\}_{t=1}^{|y|}$
\EndFunction

\State Define
$\ell(\rho,A)\gets
\min\!\left\{
\rho A,\,
\operatorname{clip}\!\left(
\rho,1-\epsilon_{\mathrm{low}},1+\epsilon_{\mathrm{high}}
\right)A
\right\}$

\ForAll{$\mathcal B\in
\operatorname{MiniBatches}(\mathcal D,B)$}

    \State $\pi_{\mathrm{old}}\gets\pi_\theta$

    \ForAll{$(x,g_x)\in\mathcal B$}
        \State Sample
        $\{y_x^{(i)}\}_{i=1}^{n}
        \sim\pi_{\mathrm{old}}(\cdot\mid x)$

        \State $r_x^{(i)}\gets R(x,y_x^{(i)})$,
        \quad
        $b_{x,i,t}\gets
        \pi_{\mathrm{old}}(\cdot\mid x,y_{x,<t}^{(i)})$

        \If{$\sum_{i=1}^{n}r_x^{(i)}=0$}
            \Comment{trigger on an all-failed group}

            \State $(y^{\mathrm J},\{\mu_t\})
            \gets
            \Call{JointDecode}
            {x,g_x,\pi_{\mathrm{old}},\pi_{\mathrm T}}$

            \If{$R(x,y^{\mathrm J})=1$}
                \State Replace
                $(y_x^{(n)},r_x^{(n)},b_{x,n,t})$
                with
                $(y^{\mathrm J},1,\mu_t)$
                \Comment{replace one failed rollout}
            \EndIf
        \EndIf

        \State $A_x^{(i)}\gets
        r_x^{(i)}
        -
        \dfrac{1}{n}\sum_{j=1}^{n}r_x^{(j)}$,
        \quad $i=1,\ldots,n$
    \EndFor

    \State $\rho_{x,i,t}(\theta)\gets
    \dfrac{
        \pi_\theta(y_{x,t}^{(i)}
        \mid x,y_{x,<t}^{(i)})
    }{
        b_{x,i,t}(y_{x,t}^{(i)})
    }$
    \Comment{token-level IS correction}

    \State $\mathcal Z_{\mathcal B}\gets
    \sum_{x\in\mathcal B}
    \sum_{i=1}^{n}|y_x^{(i)}|$

    \State Update $\theta$ by maximizing
    $\displaystyle
    \mathcal J_{\mathcal B}(\theta)
    =
    \frac{1}{\mathcal Z_{\mathcal B}}
    \sum_{x\in\mathcal B}
    \sum_{i=1}^{n}
    \sum_{t=1}^{|y_x^{(i)}|}
    \ell\!\left(
        \rho_{x,i,t}(\theta),
        A_x^{(i)}
    \right)$

\EndFor

\end{algorithmic}
\end{algorithm}

\begin{figure*}[t]
    \centering
    \includegraphics[
        width=0.8\textwidth
    ]{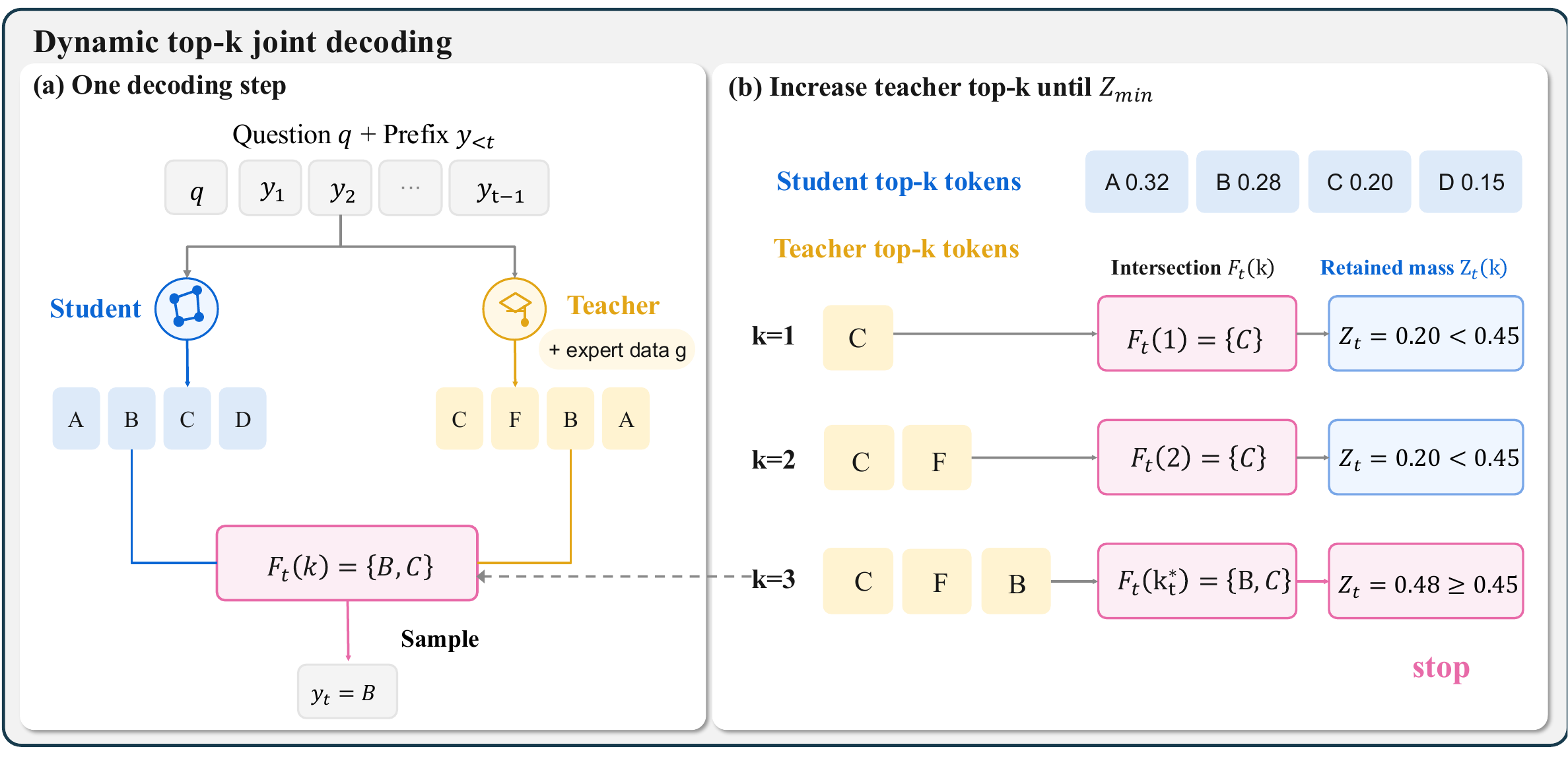}
    \caption{
    Illustration of dynamic top-$k$ joint decoding.
    At each decoding step, the student and teacher policies produce
    their respective top-$k$ token sets, whose intersection defines
    the candidate set $F_t(k)$.
    The teacher-side value of $k$ is progressively increased until
    the retained student probability mass $Z_t(k)$ reaches the
    prescribed threshold $Z_{\min}$.
    The next token is then sampled from the student distribution
    restricted to the resulting candidate set.
    }
    \label{fig:joint-decoding}
\end{figure*}

\section{Additional Experimental Results}
\label{appendix-results}

\subsection{Pass@$k$ Results}
\label{pass-at-k-results}

As shown in Table~\ref{tab:pass16-results}, \Mymethod{} substantially improves Pass@16 across both model scales. Compared with GRPO, it raises the average Pass@16 by $8.8$ percentage points on Qwen3-4B-Base and $10.8$ points on Qwen3-8B-Base. With the 4B model, \Mymethod{} achieves the best average result of $43.3\%$, outperforming the strongest baseline by $6.6$ points. With the 8B model, it reaches $45.5\%$ and matches the best baseline. These gains indicate that \Mymethod{} improves not only average single-sample accuracy but also the likelihood of discovering at least one correct solution under repeated sampling, supporting its effectiveness in expanding successful exploration on difficult problems.

\begin{table*}[t]
\centering

\caption{
Pass@16 performance on three mathematical reasoning benchmarks.
\textbf{Bold} and \underline{underlined} values indicate the best
and second-best results, respectively.
$\triangle$ denotes the absolute performance gains, in percentage
points, of our method over GRPO.
}
\label{tab:pass16-results}

\footnotesize
\setlength{\tabcolsep}{12pt}

\begin{tabular}{@{}l*{4}{c}@{}}
\toprule

\textbf{Method}
& \textbf{AIME25}
& \textbf{AIME26}
& \textbf{HMMT26}
& \textbf{Avg.} \\

\midrule
\multicolumn{5}{c}{\textbf{Qwen3-4B-Base}} \\
\midrule

\textbf{Base}       & 26.7 & 16.7 & 18.2 & 20.5 \\
\textbf{GRPO}       & 33.3 & 40.0 & 30.3 & 34.5 \\
\textbf{LUFFY}      & 40.0 & 40.0 & 27.3 & 35.8 \\
\textbf{Prefix-RFT} & 40.0 & 36.7 & 24.2 & 33.6 \\
\textbf{ReLIFT}     & 40.0 & 23.3 & 24.2 & 29.2 \\
\textbf{RLRT}       & 36.7 & 40.0 & 33.3 & 36.7 \\

\rowcolor[HTML]{E8F3F1}
\textbf{\Mymethod{}} & 46.7 & 50.0 & 33.3 & 43.3 \\

\rowcolor[HTML]{E8F3F1}
$\triangle$ & $+13.4$ & $+10.0$ & $+3.0$ & $+8.8$ \\

\midrule
\multicolumn{5}{c}{\textbf{Qwen3-8B-Base}} \\
\midrule

\textbf{Base}       & 33.3 & 23.3 & 18.2 & 24.9 \\
\textbf{GRPO}       & 36.7 & 40.0 & 27.3 & 34.7 \\
\textbf{LUFFY}      & 43.3 & 46.7 & 27.3 & 39.1 \\
\textbf{Prefix-RFT} & 46.7 & 40.0 & 30.3 & 39.0 \\
\textbf{ReLIFT}     & 50.0 & 46.7 & 30.3 & 42.3 \\
\textbf{RLRT}       & 50.0 & 53.3 & 33.3 & 45.5 \\

\rowcolor[HTML]{E8F3F1}
\textbf{\Mymethod{}} & 50.0 & 53.3 & 33.3 & 45.5 \\

\rowcolor[HTML]{E8F3F1}
$\triangle$ & $+13.3$ & $+13.3$ & $+6.0$ & $+10.8$ \\

\bottomrule
\end{tabular}

\end{table*}

\subsection{Training Curves}
\label{app:traing-curves}

\textbf{Question Rescue through Joint Decoding.}
Figure~\ref{fig:rescue-dynamics} directly illustrates how joint decoding recovers learning signals from questions that are entirely unsolved by student self-sampling. Across training, joint decoding rescues approximately $80\%$ and $76\%$ of these questions for Qwen3-4B and Qwen3-8B on mathematical reasoning tasks, respectively, substantially reducing the number of all-incorrect groups. It also rescues $36\%$ of the failed questions on ALFWorld, despite the greater difficulty of long-horizon interaction. These results confirm that joint decoding enables the policy to obtain successful trajectories on problems beyond the reach of its current self-exploration, thereby converting otherwise uninformative groups into effective training signals.

\textbf{Behavior-Policy Divergence.}
Figure~\ref{fig:kl-dynamics} shows that the retained student probability mass $Z_t$ remains consistently high throughout training, increasing from approximately $0.91$ to $0.94$. Correspondingly, the induced token-level KL divergence, $-\log Z_t$, decreases from roughly $0.10$ to $0.06$. This trend indicates that dynamic top-$k$ joint decoding preserves most of the student probability mass and becomes increasingly compatible with the evolving student policy, thereby providing teacher guidance without introducing a large behavior-policy shift.

\label{training-curves}
\begin{figure*}[t]
    \centering
    \includegraphics[
        width=0.92\textwidth
    ]{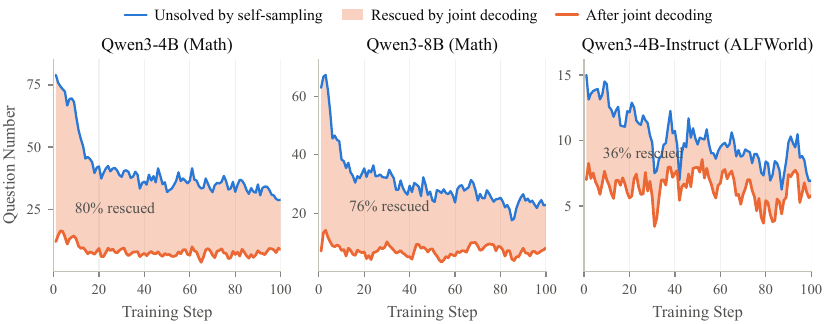}
    \caption{
    Dynamics of question rescue during training.
    The blue curves show the number of questions remaining unsolved
    through student self-sampling, while the orange curves show the
    number remaining unsolved after distribution-constrained joint
    decoding.
    The shaded regions represent questions successfully rescued by
    joint decoding.
    Results are reported for Qwen3-4B-Base and Qwen3-8B-Base on
    mathematical reasoning tasks, and Qwen3-4B-Instruct on ALFWorld.
    }
    \label{fig:rescue-dynamics}
\end{figure*}

\begin{figure*}[t]
    \centering
    \includegraphics[
        width=0.8\textwidth
    ]{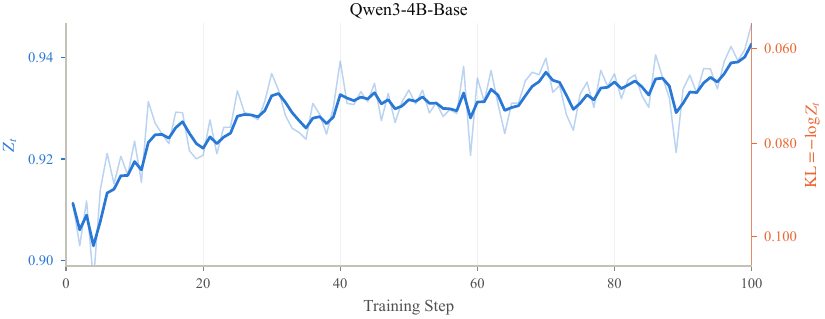}
    \caption{
    Evolution of the normalization term $Z_t$ and the corresponding
    KL divergence, $\mathrm{KL}=-\log Z_t$, during training with
    Qwen3-4B-Base.
    The light blue curve shows the raw measurements, while the dark
    blue curve shows the smoothed trend.
    }
    \label{fig:kl-dynamics}
\end{figure*}

\newpage

\newpage

\section{Experimental Details}
\label{appendix-experiments}
We build all methods on top of the \texttt{verl} framework. For mathematical reasoning, we use Qwen3-4B-Base and Qwen3-8B-Base as the student policies. For a fair comparison, all reasoning methods are trained on the same DAPO-Math-17K corpus~\citep{yu2026dapo} using a shared hyperparameter configuration. For the agentic experiments, we train Qwen3-4B-Instruct separately on ALFWorld and ScienceWorld. Across all settings, Qwen3-4B-Instruct serves as the teacher and conditions on expert solutions to provide token-level guidance. All experiments are conducted on four NVIDIA H20 GPUs. Training either Qwen3-4B-Base or Qwen3-8B-Base on DAPO-Math-17K takes approximately $1.5$--$2$ days, while training Qwen3-4B-Instruct for each agentic task takes approximately 1 day.

\subsection{Training Details}
\label{training-details}

\textbf{Training Efficiency.}
As shown in Table~\ref{tab:math-training-efficiency}, \Mymethod{} increases the average time per training step from $28.3$ to $43.6$ minutes due to the additional teacher-guided decoding. This additional computation, however, improves rollout utilization from $57.2\%$ to $91.9\%$, allowing substantially more sampled groups to provide effective learning signals. The utilization-per-time ratio also remains comparable to that of GRPO ($2.11$ vs.\ $2.01$ percentage points per minute). Thus, \Mymethod{} introduces a clear wall-clock overhead, but recovers considerably more useful training samples from each rollout batch.

\begin{table*}[t]
\centering
\caption{Implementation and hyperparameter configuration for mathematical reasoning experiments.}
\label{tab:implementation-configuration}
\small
\setlength{\tabcolsep}{8pt}
\renewcommand{\arraystretch}{1.08}

\begin{tabularx}{\textwidth}{
    @{}
    >{\raggedright\arraybackslash}p{0.38\textwidth}
    >{\raggedright\arraybackslash}X
    @{}
}
\toprule
\textbf{Hyperparameter} & \textbf{Value} \\
\midrule

\addlinespace[3pt]
\multicolumn{2}{@{}l}{\textit{Common (shared by all methods)}} \\[2pt]
\midrule
Optimizer                  & AdamW \\
Learning rate              & 1e-6 \\
Global training batch size & 128 \\
PPO mini-batch size        & 64 \\
Rollout group size         & 8 \\
Mixed-group composition    & 7 on-policy trajectories + 1 joint/off-policy trajectory \\
Maximum prompt length      & 2048 \\
Maximum response length    & 20480 \\
KL coefficient $\beta$     & 0 \\
Entropy coefficient        & 0 \\
PPO clipping $(\epsilon_{\mathrm{low}},\epsilon_{\mathrm{high}})$ & $(0.20, 0.28)$ \\
Advantage normalization    & disabled \\
Loss aggregation           & token-level \\
Temperature                & 1.0 \\
\addlinespace[3pt]
\midrule
\multicolumn{2}{@{}l}{\textit{\Mymethod{} configuration}} \\[2pt]
\midrule
Joint-decoding policy & Hard joint policy ($\lambda=0$) \\
Joint-decoding trigger & All rollouts fail \\
Joint-decoding response length & 10240 \\
Student cutoff $K_{\mathrm S}$ & 50 \\
Maximum teacher cutoff $K_{\mathrm T}^{\max}$ & 20 \\
Retained-mass threshold $Z_{\min}$ & 0.8 \\

\bottomrule
\end{tabularx}
\end{table*}

\begin{table*}[t]
\centering
\caption{Implementation and hyperparameter configuration for agentic experiments.}
\label{tab:agentic-implementation-configuration}
\small
\setlength{\tabcolsep}{8pt}
\renewcommand{\arraystretch}{1.08}

\begin{tabularx}{\textwidth}{
    @{}
    >{\raggedright\arraybackslash}p{0.38\textwidth}
    >{\raggedright\arraybackslash}X
    @{}
}
\toprule
\textbf{Hyperparameter} & \textbf{Value} \\
\midrule

\multicolumn{2}{@{}l}{\textit{Common optimization configuration}} \\[2pt]
\midrule
Optimizer                     & AdamW \\
Learning rate                 & 1e-6 \\
Global training batch size    & 16 \\
PPO mini-batch size           & 16 \\
Rollout group size            & 8 \\
Mixed-group composition       & 7 on-policy episodes + 1 joint/off-policy episode \\
Maximum prompt length         & 8192 \\
Maximum response length per turn   & 4096 \\
Maximum episode steps         & 20 \\
KL coefficient $\beta$        & 0 \\
Entropy coefficient           & 0 \\
PPO clipping $(\epsilon_{\mathrm{low}},\epsilon_{\mathrm{high}})$ & $(0.20,0.28)$ \\
Advantage normalization       & Disabled \\
Loss aggregation              & Seq-mean-token-mean \\
Sampling temperature          & 1 \\

\addlinespace[3pt]
\midrule
\multicolumn{2}{@{}l}{\textit{\Mymethod{} configuration}} \\[2pt]
\midrule
Joint-decoding policy         & Hard joint policy ($\lambda=0$) \\
Joint-decoding trigger        & All student episodes in the rollout group fail \\
Joint-decoding response length  & 2048 \\
Student cutoff $K_{\mathrm S}$  & 50 \\
Maximum teacher cutoff $K_{\mathrm T}^{\max}$  & 20 \\
Retained-mass threshold $Z_{\min}$   & 0.8 \\

\bottomrule
\end{tabularx}
\end{table*}

\begin{table}[t]
\centering
\caption{
Comparison of training efficiency for Qwen3-4B-Base on DAPO-Math-17K. All experiments use Qwen3-4B-Base and four NVIDIA H20 GPUs.
Time per step is averaged over all policy-update steps.
Rollout utilization is the percentage of sampled rollout groups with
non-zero group-relative advantages.
Utilization/time is calculated as rollout utilization divided by average
time per step and is reported in percentage points per minute
(higher is better).
}
\label{tab:math-training-efficiency}
\small
\setlength{\tabcolsep}{7pt}
\renewcommand{\arraystretch}{1.12}

\begin{tabular}{
    @{}
    l
    c
    c
    c
    @{}
}
\toprule
\textbf{Method}
& \textbf{Time/step (min)}
& \textbf{Rollout Utilization (\%)}
& \textbf{Utilization/time} \\
\midrule

GRPO& 28.3 &57.2 &2.01  \\
\rowcolor[HTML]{E8F3F1}
\textbf{\Mymethod{}}
& 43.6 & 91.9 & 2.11\\

\bottomrule
\end{tabular}
\end{table}

\subsection{Evaluation Details}
\label{evaluation-benchmarks}
\paragraph{Mathematical Reasoning Benchmarks.}
We evaluate mathematical reasoning on AIME 2025, AIME 2026, HMMT 2026, and MATH500. AIME 2025 and AIME 2026 each contain 30 problems drawn from the corresponding AIME I and AIME II competitions. These problems require exact numerical answers and cover algebra, geometry, combinatorics, and number theory. HMMT 2026 contains 33 problems from the February 2026 Harvard--MIT Mathematics Tournament and provides a complementary evaluation of advanced competition-level mathematical reasoning.  MATH-500 is a widely used subset of the MATH test set comprising 500 problems across seven mathematical subjects and a range of difficulty levels. Compared with the recent competition benchmarks, MATH-500 provides broader coverage of mathematical topics and evaluates performance on a larger and more diverse problem set. Since our reasoning models are trained on DAPO-Math-17K, we treat these three competition-mathematics benchmarks as in-domain evaluations.

\paragraph{Out-of-domain Reasoning Benchmarks.}
We evaluate cross-domain generalization on GPQA-Diamond and Humanity's Last Exam (HLE). GPQA-Diamond contains 198 expert-written, graduate-level multiple-choice questions, primarily covering physics, chemistry, and biology. The full HLE benchmark comprises 2,500 expert-level questions spanning more than 100 academic and professional subjects. We use its text-only subset, which contains 500 questions and excludes items that require multimodal inputs. These benchmarks differ substantially from the mathematics-focused training distribution and therefore assess whether the reasoning capabilities acquired during training transfer to previously unseen domains.

\paragraph{Agentic Benchmarks.}
We evaluate agentic reasoning on ALFWorld and ScienceWorld using Qwen3-4B-Instruct. ALFWorld is a text-based interactive benchmark in which an agent must complete household tasks by interpreting observations and executing sequences of admissible actions. Following the standard evaluation protocol, we evaluate on 140 task instances from the valid-seen split and 134 task instances from the valid-unseen split. ScienceWorld is an interactive text environment centered on elementary-science reasoning. It comprises 30 task types across 10 scientific topics, instantiated into 1,819 task variations. These tasks require the agent to explore the environment, manipulate objects, and complete multi-step scientific procedures. For both benchmarks, each rollout corresponds to a complete interaction episode, and we use the environment-provided task-completion signal to determine success.
\paragraph{Evaluation Hyperparameters.}
For all reasoning benchmarks, we independently sample 16 responses per problem and report both avg@16 and pass@16. We use a sampling temperature of $0.6$, top-$p$ of $1$, top-$k$ of $-1$, and a maximum response length of $20{,}480$ tokens. We apply the same prompt template, answer extractor, and verifier to all methods. For ALFWorld and ScienceWorld, we use a sampling temperature of 1 and a maximum of 20
interaction steps per episode. We evaluate each task instance over 8 episode(s) and report the mean episode success rate. During all evaluations, the student operates independently from the original task input without expert solutions, teacher predictions, or joint decoding.

\section{Prompts}
\label{appendix-prompts}
we present the prompts used for mathematical reasoning and agentic tasks. The student prompt contains only the original task information and the interaction context available under the target policy. The teacher receives the same task context together with privileged guidance derived from off-policy data. For mathematical reasoning, this guidance is an expert solution associated with the training problem. For ALFWorld, it is a known-correct expert trajectory, which the teacher aligns with the current observation and action history to identify a suitable next action. The off-policy data is never included in the student's prompt or generated context. Instead, the teacher uses it only to produce token-level predictions during joint decoding, while the student distribution determines the relative sampling probabilities over the retained candidate tokens. This design incorporates information from successful off-policy solutions without training the student directly on their full trajectories or exposing privileged information at inference time.

\subsection{Mathematical Reasoning Prompt}

\begin{promptbox}{Mathematical Reasoning Prompt}
You are a helpful assistant. Please reason step by step, and put your final answer within \boxed{}.

{problem}
\end{promptbox}

\begin{promptbox}{Teacher Prompt}
You are an expert mathematician. Use the mathematical reasoning provided in <private_guidance> to help solve the problem. Write one complete, self-contained solution in a natural step-by-step problem-solving style. Include the necessary equations, intermediate conclusions, and justifications, fill in any missing steps, and ensure that the final result is mathematically correct. Present only the direct solution and put the final answer within \boxed{}.

<problem>
{question}
</problem>

<private_guidance>
{prefix}
</private_guidance>

## Solution:
\end{promptbox}

\newpage
\subsection{ALFWorld Prompts}
\label{appendix-alfworld-prompts}

\begin{agentpromptbox}{ALFWorld Reasoning Prompt}
<System Prompt>

You are acting in ALFWorld TextWorld. Choose exactly one command from the provided admissible commands on each turn. You may reason briefly inside <think>...</think> before acting. Call the env_step tool with that exact command. Do not output a final natural-language answer.

<User Prompt>

### Task
{task_text}

### Current Observation
{observation}

### History Actions
{history_actions}

### Admissible Commands
{admissible_commands}

### Instructions
- Think briefly about the current state inside <think>...</think> tags.
- Select exactly one command and execute it through the env_step tool.
- The command must exactly match one item from Admissible Commands.
- Follow the ALFWorld TextWorld command style, such as `go to dresser 1`, `take mug 1 from cabinet 3`, or `use desklamp 1`.
- Treat the official observation and admissible-command list as the source of truth.
- Do not output explanations or a final natural-language answer.

### Output Format
<think>
[Brief reasoning about the current state and next command.]
</think>
<tool_call>
{{"name": "env_step", "arguments": {{"command": "[one admissible command]"}}}}
</tool_call>
\end{agentpromptbox}

\newpage
\begin{agentpromptbox}{ALFWorld Teacher Prompt}
<System Prompt>

You are acting in ALFWorld TextWorld. Choose exactly one command from the provided admissible commands on each turn. You may reason briefly inside <think>...</think> before acting. Call the env_step tool with that exact command. Do not output a final natural-language answer.

<User Prompt>

### Task
{task_text}

### Current Observation
{observation}

### History Actions
{history_actions}

### Expert Guidance
The following expert trajectory is a known-correct solution for the task and is provided in full from the initial state:

{expert_trajectory}

Use the expert trajectory as private guidance. Locate the current progress by aligning History Actions and Current Observation with the trajectory, and identify the next action along the corresponding solution path. Object names may differ from those in the expert trajectory, so match entities and actions by their semantic roles. Do not copy an expert action when it conflicts with the current observation or is absent from Admissible Commands.

### Admissible Commands
{admissible_commands}

### Instructions
- Think briefly about the current state inside <think>...</think> tags.
- Select exactly one command and execute it through the env_step tool.
- The command must exactly match one item from Admissible Commands.
- Follow the ALFWorld TextWorld command style, such as `go to dresser 1`, `take mug 1 from cabinet 3`, or `use desklamp 1`.
- Treat the official observation and admissible-command list as the source of truth.
- Do not output explanations or a final natural-language answer.

### Output Format
<think>
[Brief reasoning about the current state and next command.]
</think>
<tool_call>
{{"name": "env_step", "arguments": {{"command": "[one admissible command]"}}}}
</tool_call>
\end{agentpromptbox}

\end{document}